\documentclass[11pt]{article}
\usepackage[margin=1in]{geometry}
\usepackage{amsmath,amssymb,amsthm,graphicx,booktabs}
\usepackage{tikz}
\usepackage{multirow}
\usepackage[numbers,sort&compress]{natbib}
\usepackage[colorlinks=true,citecolor=blue,linkcolor=blue]{hyperref}
\usepackage{caption}
\newtheorem{proposition}{Proposition}
\newcommand{\R}{\mathbb{R}}
\newcommand{\E}{\mathbb{E}}

\title{The Quadrilateral Loss: Additivity as a Measurable\\ Behavior of Dense Neural Networks}
\author{Antonio Di Cecco\thanks{Universit\`a ``G.~d'Annunzio'' Chieti--Pescara, and School of AI. Author of \emph{Explainable AI with Python} \citep{dicecco2025xai}.}}
\date{July 2026}

\begin{document}
\maketitle

\begin{abstract}
Additive models buy interpretability by forbidding feature interactions, a constraint that neural instantiations enforce architecturally. We introduce the \emph{quadrilateral loss}, a differentiable penalty that treats additivity as a measurable behavior instead: a second-order mixed difference on pairs of training points swapping one coordinate, which vanishes if and only if the coordinate carries no interaction, remains informative for piecewise-linear networks, and equals in expectation the per-coordinate interaction mass of the interventional Shapley-GAM. The loss turns additivity into a dial---most learned interactions prove removable almost for free, and on small datasets a moderate penalty improves accuracy and additivity simultaneously---and into an online observable: its per-feature \emph{surrender curves} show, across seeds and datasets, that pre-regularization interaction magnitude barely predicts what a regularized model retains, undermining post-hoc interaction rankings. Against this instrument we compare routes to exact additivity, spanning structural masks, behavioral penalties (optionally crystallized into exact structure), weight decay, backfitting, the shared-section model, and bagged boosted stumps: constraining behavior before structure dominates weight-space constraints, rankings reverse between data regimes, and converging routes agree on the shape functions themselves. Three silent failure modes we document share one anatomy: guarantees imported into settings that quietly void their preconditions.
\end{abstract}

\section{Introduction}
\label{sec:intro}

Additive models of the form $f(x) = \beta_0 + \sum_{i=1}^{d} f_i(x_i)$ occupy a privileged point on the accuracy--interpretability spectrum: each coordinate's contribution is a one-dimensional function that can be plotted, audited, and constrained, while the component functions themselves remain arbitrarily nonlinear \citep{hastie1986gam}. Neural Additive Models \citep{agarwal2021nam} realize this class with one subnetwork per feature, and their extensions with selected pairwise terms \citep{lou2013ga2m, chang2022node} recover much of the accuracy lost to the additivity constraint. In all of these approaches the constraint is \emph{structural}: the architecture makes interactions impossible by construction.

Structural enforcement is clean but rigid: it commits to the additive hypothesis before training, discards any pretrained dense network, and offers no middle ground between ``fully additive'' and ``fully connected.'' Yet it is far from the only road. One can penalize a dense network's interactions behaviorally and crystallize the structure afterwards; decay its cross-feature weights during training; run the classical backfitting algorithm with neural smoothers; read an additive predictor directly out of a dense network's mean-conditioned sections; or leave neural networks aside and bag depth-one boosted trees. All of these produce (exactly or asymptotically) the same functional class. The question this paper asks is the one a practitioner actually faces: \emph{among the many routes to an additive model, which ones work, when, and do they arrive at the same functions?}

Answering that question requires an instrument before it requires a method: a way to \emph{measure} how interacting a network is. The natural functional measure faces an obstacle that is easy to underestimate. The mixed partial derivative is identically zero almost everywhere for piecewise-linear networks---their interactions live in the gradient's discontinuities, invisible to any pointwise test---so a useful measure must be \emph{nonlocal}. We use the elementary second-order finite difference on a quadrilateral of inputs, which detects interactions across activation boundaries and costs four forward passes; the same statistic serves throughout as diagnostic, as training penalty, and as the criterion by which one-dimensional response plots earn the right to be called shape functions (Section~\ref{sec:method}).

Our contributions, in the order the paper develops them: (i) the quadrilateral loss, a new differentiable penalty, with a proof that it characterizes additivity on product domains (Proposition~\ref{prop:char}), survives piecewise linearity where mixed-derivative penalties fail, and---via a two-line M\"obius argument---equals, in expectation, the total interaction mass of the interventional Shapley-GAM containing each coordinate, so that driving it down also certifies the faithfulness of ordinary SHAP explanations (Section~\ref{sec:method}); (ii) the $\lambda$-sweep, showing a strongly asymmetric accuracy--interaction trade-off in which most interactions are removable almost for free (Figure~\ref{fig:sweep}), together with a regularization sweet spot on small data---seed-universal on Wine---where moderate $\lambda$ improves accuracy and additivity simultaneously, a dimensionless-$\lambda$ argument and balance heuristic $\lambda_{\mathrm{bal}}$ for choosing it, and a tuning experiment showing why $\lambda$ should \emph{not} be selected on accuracy alone (Sections~\ref{sec:sweep} and~\ref{sec:multidataset}); (iii) surrender curves---per-feature interaction tracked online during training---whose five-seed, four-dataset replication shows that pre-regularization magnitude is only a weak, unstable predictor of which interactions a regularized model retains (Section~\ref{sec:surrender}); (iv) the observation that one-dimensional response sections only become well-defined shape functions once measured interaction is driven down (Figure~\ref{fig:responses}); (v) crystallization, an optional post-processing that converts the penalized model into an exactly additive one at no accuracy cost relative to a from-scratch NAM, and whose comparison against a weight-space alternative of our design (winner-take-all decay) shows that constraining behavior before structure dominates (Section~\ref{sec:exact}); (vi) calibration against classical backfitting with neural smoothers, which quietly matches modern per-feature architectures (Section~\ref{sec:backfit}); (vii) a shared-parameter section model on which sequential backfitting diverges for identifiable structural reasons but joint training of the same object---the shared-section model---is stable and yields our best exactly additive neural model (Section~\ref{sec:sections-model}); and (viii) a six-dataset benchmark including TEAM and the soft penalized model itself, showing that route rankings are regime-dependent and that the near-additive model, at the heuristic default $\lambda=1$, leads the additive family on three datasets---plus three open clinical cohorts on which every additive route beats the dense network, the regime where additivity pays outright (Sections~\ref{sec:multidataset}--\ref{sec:clinical}).

Figure~\ref{fig:taxonomy} organizes the routes by where the constraint acts---structure, behavior, fitting algorithm, or read-out---and marks the paper's contributions within that ontology.

\begin{figure}[t]
\centering
\begin{tikzpicture}[
  every node/.style={font=\small},
  box/.style={draw, rounded corners=2pt, align=center, inner sep=4pt, minimum height=8mm},
  cat/.style={box, fill=gray!12, font=\small\bfseries},
  ours/.style={box, thick, draw=black},
  base/.style={box, draw=gray},
  lab/.style={font=\scriptsize\itshape, text=gray}]
\node[cat] (root) at (0,0) {Additive predictor $f(x)=\beta_0+\sum_i f_i(x_i)$\\[-1pt]{\scriptsize where is the constraint imposed?}};
\node[cat] (S) at (-6.55,-1.7) {Structure};
\node[cat] (B) at (-2.95,-1.7) {Behavior};
\node[cat] (A) at (1.0,-1.7) {Fitting algorithm};
\node[cat] (R) at (5.9,-1.7) {Read-out};
\draw (root) -- (S); \draw (root) -- (B); \draw (root) -- (A); \draw (root) -- (R);
\node[base] (nam) at (-7.4,-3.05) {NAM\\{\scriptsize masks}};
\node[base] (team) at (-5.6,-3.05) {TEAM\\{\scriptsize stumps}};
\node[ours] (quad) at (-3.95,-3.95) {Quad loss$^{\star}$\\{\scriptsize $\to$ crystallize}};
\node[ours] (wta) at (-1.75,-3.05) {WTA decay$^{\star}$\\{\scriptsize $\to$ prune}};
\node[base] (bf) at (0.35,-3.95) {Backfitting\\{\scriptsize partial residuals}};
\node[ours] (sec) at (5.9,-3.5) {Shared sections$^{\star}$\\{\scriptsize mean-conditioned}};
\draw (S) -- (nam); \draw (S) -- (team);
\draw (B) -- (quad); \draw (B) -- (wta);
\draw (A) -- (bf);
\draw (R) -- (sec);
\draw[dashed, gray] (quad.south) to[bend right=18] node[pos=0.75, above=1pt, font=\scriptsize\itshape, text=gray]{soft/hard pair, shared target} (sec.south);
\node[lab] at (0,-5.7) {$^{\star}$introduced in this paper \quad\quad bold border: ours \quad\quad TEAM: \citet{dicecco2024team}};
\end{tikzpicture}
\caption{\textbf{Ontology of the routes compared}, organized by where the additivity constraint acts: on the \emph{structure} (the architecture cannot express interactions), on the \emph{behavior} (a dense model is penalized, then optionally crystallized), on the \emph{fitting algorithm} (components are estimated additively), or on the \emph{read-out} (an unconstrained network is queried through its one-feature sections). The quadrilateral loss additionally serves as the measuring instrument for all four families, and the two routes we introduce are the soft and hard treatments of one constraint, sharing a target but not a parametrization (dashed; Section~\ref{sec:sections-model}).}
\label{fig:taxonomy}
\end{figure}

We emphasize at the outset what this paper does not claim. All experiments use a single tabular dataset and small multilayer perceptrons; we make no claim of generality across domains or scales, and we position this work as a carefully analyzed case study of a simple idea rather than a benchmark sweep. Related penalties on Hessian off-diagonals and functional decompositions exist in the literature (Section~\ref{sec:landscape}); training-time penalization of explanations also predates us \citep{ross2017right,rieger2020cdep}; the quadrilateral loss itself---a finite-difference penalty targeting global additivity with an exact characterization---is, to our knowledge, new, as are the $\lambda$-interpolation analysis and the surrender-curve decomposition built on it.

\section{The Additive Modeling Landscape}
\label{sec:landscape}

An additive model predicts $f(x)=\beta_0+\sum_i f_i(x_i)$ with arbitrary univariate components: each $f_i$ is a plottable, auditable object, and the price is the exclusion of interactions. The classical instantiation fits the components with smoothers via backfitting \citep{hastie1986gam,buja1989linear}. The tree lineage replaces smoothers with depth-one ensembles---boosted stumps are additive by construction \citep{friedman2001greedy}---running through the intelligible models of \citet{lou2012intelligible}, GA$^2$M's screened pairwise terms \citep{lou2013ga2m}, and the Explainable Boosting Machine \citep{nori2019interpretml}; TEAM \citep{dicecco2024team,dicecco2025xai_ch6}, described in Section~\ref{sec:setup}, is a compact member, and on tabular data this lineage remains hard to beat \citep{grinsztajn2022trees}. Its canonical deployment is clinical risk: \citet{caruana2015intelligible} famously used an intelligible additive model on pneumonia data to surface the rule \emph{asthma implies lower mortality risk}---a triage artifact that a dense model would have silently exploited---and that episode is the template for what our surrender curves are built to do online: expose which interactions a model is retaining, so that a domain expert can decide whether they are signal or artifact. The neural lineage gives each feature its own network \citep{agarwal2021nam}, with successors sharing bases across features \citep{radenovic2022nbm}, modeling interactions via low-rank polynomial decompositions \citep{dubey2022spam}, building components from differentiable trees \citep{chang2022node}, adding structured pairwise terms \citep{yang2021gaminet}, or treating the sub-networks in a Bayesian fashion, with credible intervals and interaction ranking \citep{bouchiat2024lanam}; our shared-section model (Section~\ref{sec:sections-model}) is the limiting case of basis sharing, with the whole network as basis and no per-feature parameters. Adjacent threads include Kolmogorov--Arnold Networks \citep{liu2024kan}, whose one-layer form with identity outer functions is exactly a GAM, and concurvity regularization \citep{siems2023concurvity}, which penalizes correlation \emph{between} fitted components to make decompositions identifiable under dependent features---complementary to our penalty on interactions \emph{within} a dense model.

On the measurement side, our statistic descends from the functional-ANOVA tradition \citep{hooker2007anova,sobol2001}, the partial-dependence-based $H$-statistic of \citet{friedman2008rules}, interaction detection by restricted retraining \citep{sorokina2008groves}, and post-hoc interaction attribution \citep{lundberg2017shap,janizek2021ih,tsang2018nid}; \citet{tsang2018neural} constrain interactions through architecture discovery, and training-time penalization of explanations predates us in the form of gradient penalties \citep{ross2017right}, attribution priors \citep{erion2021attribution}, and contextual decomposition penalization of practitioner-specified attributions \citep{rieger2020cdep}. Pointwise second-derivative penalties are ill-posed for piecewise-linear networks, which motivates our finite-difference formulation; the quadrilateral loss itself---a nonlocal penalty targeting global additivity with an exact characterization---is, to our knowledge, new, as are the $\lambda$-interpolation analysis and the surrender-curve decomposition built on it. Our crystallization step borrows saliency from magnitude--gradient pruning \citep{molchanov2017pruning} and group sparsity \citep{wen2016ssl}.

\section{The Quadrilateral Loss}
\label{sec:method}

We introduce the quadrilateral loss, a differentiable training penalty whose population version exactly characterizes additivity and which remains informative for piecewise-linear networks. Adjacent constructions exist---mixed differences as post-hoc diagnostics in the functional-ANOVA tradition \citep{hooker2007anova}, and training-time penalization of local, practitioner-specified attributions \citep{ross2017right,rieger2020cdep,erion2021attribution}---but to our knowledge the statistic below has not previously been used as a training objective, nor tied to a global additivity guarantee.

Let $f_\theta:\R^d\to\R$ be a network and let $x,\tilde{x}$ be two inputs. For a coordinate $i$, define the single-coordinate swaps $x^{(i)} = (\tilde{x}_i, x_{-i})$ and $\tilde{x}^{(i)} = (x_i, \tilde{x}_{-i})$, where $x_{-i}$ denotes all coordinates except $i$. The \emph{quadrilateral residual} is
\begin{equation}
\label{eq:quad}
Q_i(x,\tilde{x}) \;=\; f_\theta(x) \;-\; f_\theta(x^{(i)}) \;-\; f_\theta(\tilde{x}^{(i)}) \;+\; f_\theta(\tilde{x}).
\end{equation}
This is the second-order mixed difference of $f_\theta$ along the coordinate-$i$ direction versus all remaining directions, evaluated at the four corners of an axis-aligned quadrilateral. If $f_\theta$ is additive, every term $f_j(\cdot)$ for $j\ne i$ appears once with each sign and cancels, as does $f_i(\cdot)$, so $Q_i \equiv 0$. Conversely:

\begin{proposition}
\label{prop:char}
Let $\mathcal{X} = \mathcal{X}_1\times\cdots\times\mathcal{X}_d$ be a product domain and $f:\mathcal{X}\to\R$. Then $Q_i(x,\tilde{x})=0$ for all $x,\tilde{x}\in\mathcal{X}$ and all $i$ if and only if $f(x) = \sum_i f_i(x_i)$ for some functions $f_i:\mathcal{X}_i\to\R$.
\end{proposition}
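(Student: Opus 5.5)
The forward direction has already been observed just before the statement: if $f(x)=\sum_j f_j(x_j)$, then in $Q_i(x,\tilde x)$ every component $f_j$ is evaluated at $x_j$ once with each sign and at $\tilde x_j$ once with each sign, so $Q_i\equiv 0$. The content is the converse. My plan is to fix a reference point $a=(a_1,\dots,a_d)\in\mathcal{X}$, which exists as long as the domain is nonempty; the empty case is trivial. I then peel off one coordinate at a time by telescoping along a path from $a$ to $x$ that changes coordinates in order.

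First I rewrite the hypothesis in a usable form. Taking $\tilde x=(a_i, y_{-i})$ in $Q_i(x,\tilde x)=0$ gives
\[
f(x_i,x_{-i})-f(a_i,x_{-i}) \;=\; f(x_i,y_{-i})-f(a_i,y_{-i})\quad\text{for all } x_{-i},y_{-i}.
\]
Since $x^{(i)}=(a_i,x_{-i})$ and $\tilde x^{(i)}=(x_i,y_{-i})$, this is exactly $Q_i=0$ after rearranging. So the increment of $f$ when coordinate $i$ moves from $a_i$ to $x_i$ does not depend on the other coordinates. I define
\[
g_i(t)=f(a_1,\dots,a_{i-1},t,a_{i+1},\dots,a_d)-f(a),
\]
which is the increment evaluated with all other coordinates at the reference. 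By the identity above, $f(t,z_{-i})-f(a_i,z_{-i})=g_i(t)$ for every $z_{-i}$.

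Next I telescope. Let $z^{(k)}=(x_1,\dots,x_k,a_{k+1},\dots,a_d)$, so that $z^{(0)}=a$ and $z^{(d)}=x$. Consecutive points $z^{(k-1)}$ and $z^{(k)}$ differ only in coordinate $k$, which moves from $a_k$ to $x_k$, so $f(z^{(k)})-f(z^{(k-1)})=g_k(x_k)$. All these points lie in $\mathcal{X}$ because the domain is a product. Summing over $k$ gives $f(x)=f(a)+\sum_k g_k(x_k)$. Setting $f_1=g_1+f(a)$ and $f_k=g_k$ for $k\ge 2$ yields the claimed form.

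There is no real obstacle here. The only point needing care is where the product structure is used: both the swapped corners in $Q_i$ and the intermediate path points $z^{(k)}$ must lie in $\mathcal{X}$. I would remark that this is exactly why the proposition is stated on product domains. I would also note that only the choices of $\tilde x$ with $\tilde x_i=a_i$ were needed, so the hypothesis is used in a weak form.
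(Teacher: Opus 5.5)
Your proof is correct and is essentially the paper's argument: fix a reference point, use $Q_i=0$ to see that resetting coordinate $i$ to the reference value changes $f$ by an amount depending on $x_i$ alone, then telescope coordinate by coordinate. The paper spreads the constant as $-\tfrac{d-1}{d}f(c)$ over all components rather than absorbing $f(a)$ into $f_1$, and, as in the paper, your telescoping only ever uses $y_{-i}=a_{-i}$, i.e.\ the single partner $\tilde x=a$, so the hypothesis is needed in an even weaker form than your closing remark says.
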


\begin{proof}
Sufficiency is the cancellation above. For necessity, fix a reference point $c\in\mathcal{X}$ and define $f_i(x_i) = f(x_i, c_{-i}) - \tfrac{d-1}{d}f(c)$. Setting $\tilde{x}=c$ in $Q_i(x,c)=0$ gives $f(x) = f(x^{(i)}\!\!\mid_{\tilde x = c}) + f(x_i,c_{-i}) - f(c)$, i.e., replacing coordinate $i$ of any point by $c_i$ changes $f$ by an amount depending on $x_i$ alone. Applying this coordinate by coordinate telescopes $f(x)$ down to $f(c)$ plus a sum of single-coordinate increments, which is the claimed additive form.
\end{proof}

Two properties matter in practice. First, \eqref{eq:quad} is \emph{kink-robust}: it integrates interaction across activation boundaries rather than sampling the gradient at a point, so it detects interactions in ReLU networks for which $\partial^2 f/\partial x_i \partial x_j = 0$ almost everywhere. Second, it is cheap: four forward passes, no double backpropagation, and the four corner evaluations batch trivially.

\paragraph{Training objective.} We penalize the empirical second moment of the residual over data pairs and coordinates,
\begin{equation}
\label{eq:loss}
\mathcal{L}(\theta) \;=\; \underbrace{\E\big[(f_\theta(x)-y)^2\big]}_{\text{task}} \;+\; \lambda\,\frac{1}{|S|}\sum_{i\in S}\;\E_{x,\tilde{x}}\big[Q_i(x,\tilde{x})^2\big],
\end{equation}
where $\tilde{x}$ is obtained by permuting the minibatch and $S$ is a random subset of coordinates resampled each step ($|S|=4$ throughout). Sampling $\tilde x$ from the data distribution rather than a product measure means the penalty is enforced \emph{on-manifold}: the network may retain interactions in regions unsupported by data. For interpretability applications this is usually the desired semantics; for worst-case guarantees it is not, and we return to this limitation in Section~\ref{sec:limits}.

\paragraph{Connection to the Shapley-GAM.} The statistic has an exact game-theoretic meaning. \citet{bordt2023shapley} show that every subset-compliant value function induces a unique functional decomposition $f=\sum_{S\subseteq[d]}f_S(x_S)$ via the M\"obius transform---the \emph{Shapley-GAM}---and that for the interventional SHAP value function $v(x,S)=\E_{z}[f(z)\,|\,do(x_S)]$ the pairwise component is precisely a quadrilateral: $f_{ij}=\E[f|do(x_i,x_j)]-\E[f|do(x_i)]-\E[f|do(x_j)]+\E[f]$. Our statistic is the two-point Monte Carlo version of this construction, and the correspondence is exact: writing $v(x,S)=\sum_{L\subseteq S}f_L$ and taking the expectation of \eqref{eq:quad} over the partner point $\tilde{x}\sim\mathcal{D}$ with $x$ fixed gives $\E_{\tilde{x}}[Q_i(x,\tilde{x})] = f(x)-v(x,[d]\!\setminus\!i)-v(x,\{i\})+v(x,\emptyset) = \sum_{L\ni i,\,|L|\ge2} f_L(x_L)$, the sum of \emph{all} interaction components of the interventional Shapley-GAM that contain coordinate $i$. The quadrilateral loss therefore penalizes the $\ell^2$ mass of the Shapley-GAM's interaction components, coordinate by coordinate, and driving it to zero is equivalent to the Shapley-GAM being of order one. Two consequences follow (a third, structural one---that additive functions are the fixed points of a mean-anchored section read-out, making our penalty the soft counterpart of a hard projection---is developed with the section model in Section~\ref{sec:sections-model}). First, by the recovery theorem of \citet{bordt2023shapley}, once a model is (near-)additive its ordinary interventional SHAP values coincide with its shape functions \emph{even under arbitrarily dependent features}---so $\lambda$ is also a dial for the faithfulness of standard SHAP explanations, quantified by the same statistic. Second, the surrender curves of Section~\ref{sec:surrender} become a tractable window onto an otherwise exponential object: the per-coordinate interaction mass of the Shapley-GAM, which \citet{bordt2023shapley} estimate by enumerating all $2^d$ subsets, is probed here at $O(d)$ forward passes per checkpoint, online, during training. 

\paragraph{Higher-order probes and uncertainty.} The construction generalizes beyond single coordinates: swapping a subset $S$ of coordinates between $x$ and $\tilde{x}$ and taking the alternating sum over the $2^{|S|}$ corners yields a one-sample Monte Carlo estimate of $\sum_{L\supseteq S} f_L(x)$, the joint interaction mass of all Shapley-GAM components containing $S$---for pairs, an $8$-point statistic that screens candidate interaction terms for a GA$^2$M at a handful of forward passes per pair, in place of exponential subset enumeration. Bayesian treatments of NAMs also rank candidate interaction pairs \citep{bouchiat2024lanam}; our methods acquire uncertainty by composition rather than redesign---the penalty is a loss term through which any Laplace or ensemble treatment of the dense network trains unchanged, the section read-out inherits credible bands by evaluating the posterior on the cross, and TEAM's bagged replicates already form a bootstrap ensemble whose between-bag spread is a confidence band at no extra cost. We use only the single-coordinate probe in this paper.

\paragraph{The knob, honestly stated.} As $\lambda$ ranges over $[0,\infty)$, \eqref{eq:loss} interpolates between an unconstrained dense network and an approximately additive one. We stress that intermediate $\lambda$ does \emph{not} correspond to a structured model class such as a GA$^2$M of fractional order: the intermediate models are dense networks with uniformly small but nonzero interactions of all orders. The interpolation is a trade-off curve in behavior space, not a nested family of hypotheses. Where a structured intermediate class is required, the penalty extends naturally by restricting the sum in \eqref{eq:loss} to coordinate swaps \emph{across} prescribed feature blocks, leaving intra-block interactions free; we leave a systematic study of this block variant to future work.

\section{Experimental Setup}
\label{sec:setup}

All experiments use the California Housing dataset \citep{pace1997calhousing} (20{,}433 block groups after dropping rows with missing bedroom counts; eight standardized features; target median house value in units of \$100{,}000), with a fixed random 80/20 train/test split. Unless stated otherwise the architecture is a two-hidden-layer MLP ($64+64$ units, GELU) trained with Adam \citep{kingma2015adam} at learning rate $10^{-3}$, batch size 512, 120 epochs, with $\lambda$ annealed linearly from zero after a 15-epoch warmup; the $\lambda$-sweep of Section~\ref{sec:sweep} uses a single-hidden-layer network ($H=64$, 80 epochs, 3 seeds per point) to keep the sweep cheap, and all comparisons in that section are against the \emph{same} architecture at $\lambda=0$. We measure interactions with the test-set statistic $\overline{Q^2} = \frac{1}{d}\sum_i \E[Q_i^2]$ estimated by batch permutation (3 replicates); for reference, the test target variance is $1.31$.

\paragraph{Routes compared.} For self-containedness we describe every method at the level needed to reimplement it.

\emph{NAM from scratch.} The same MLP with block-diagonal binary masks fixed at initialization: first-layer units are pre-assigned to features in balanced groups (unit $j$ may connect only to feature $j \bmod d$), second-layer units connect only to first-layer units of their own group, and the linear output layer sums all groups. The masks are re-applied after every optimizer step, so each hidden path processes exactly one feature and the model is additive by construction \citep{agarwal2021nam}. We use standard GELU units rather than the exp-centered (ExU) activations of the original NAM; we return to this choice, which is not innocent, in Section~\ref{sec:sections}.

\emph{WTA decay (ours).} Dense training in which, after a 15-epoch warmup, every non-maximal input weight of each first-layer unit is multiplied by $(1-\varepsilon)$ at each step ($\varepsilon$ annealed to $0.02$), so each unit's receptive field contracts toward a single self-selected feature; a grouped analogue acts at the second layer with a $2.5\times$ stronger rate. Training is followed by hard pruning to the winner topology (with the capacity-balancing repair of Section~\ref{sec:exact}) and fine-tuning under the masks.

\emph{Quadrilateral penalty (ours).} Dense training with \eqref{eq:loss}, $\lambda$ annealed to 20, followed by the same pruning-and-fine-tuning protocol; first-layer feature assignments are extracted from an exponential moving average of the weight--gradient saliency $|w\odot\nabla_w \mathcal{L}|$ \citep{molchanov2017pruning}.

\emph{Backfitting.} Backfitting \citep{hastie1986gam} uses single-feature networks ($1{\to}64{\to}1$, GELU) as smoothers, fitting them cyclically on partial residuals $y-\alpha-\sum_{j\ne i}f_j(x_j)$, eight inner epochs per component per sweep, warm-started across 40 sweeps, with each component mean-centered after its update and the intercept $\alpha$ absorbing the shifts.

\emph{Shared-section model (ours).} A single ordinary dense network $g_\theta$ read out through Eq.~\ref{eq:secmodel}: the additive predictor is the sum of $g_\theta$'s mean-conditioned one-feature sections, trained end-to-end by backpropagating the task loss through all $d{+}1$ masked forward passes jointly.

\emph{TEAM.} A Tree Ensemble Additive Model \citep{dicecco2024team,dicecco2025xai_ch6}: a bagged ensemble ($B$ bootstrap replicates) of gradient-boosted regression trees of depth one. Additivity is exact and comes from the weak learner, not from any constraint machinery: a depth-one tree (stump) is a step function of a single feature, so a boosted sum of stumps is a sum of per-feature step functions, $\sum_m \nu\, h_m(x_{i_m}) = \sum_i f_i(x_i)$ with piecewise-constant $f_i$; bagging averages additive models and is therefore additive. Boosting chooses at every stage which feature to split, so capacity is allocated across features adaptively by the functional gradient rather than fixed per branch; bagging reduces the variance of that greedy allocation. This is also where TEAM departs from the EBM lineage of \citet{lou2012intelligible,nori2019interpretml}: EBMs fit shape functions by \emph{cyclic} gradient boosting, visiting every feature in round-robin with a small learning rate to wash out order effects, which costs a full pass over the feature set per boosting round; TEAM lets each stage select its feature greedily by standard split search and delegates the de-biasing of that greed to bagging, whose replicates are embarrassingly parallel. The result is a simpler and faster training loop at the price of coarser per-feature budget control. Shape functions are read out by the same mean-conditioning used in Eq.~\ref{eq:secmodel}. We report TEAM at $B{=}5$ bags of 1{,}000 stumps (learning rate $0.1$).

\begin{figure}[t]
\centering
\includegraphics[width=\linewidth]{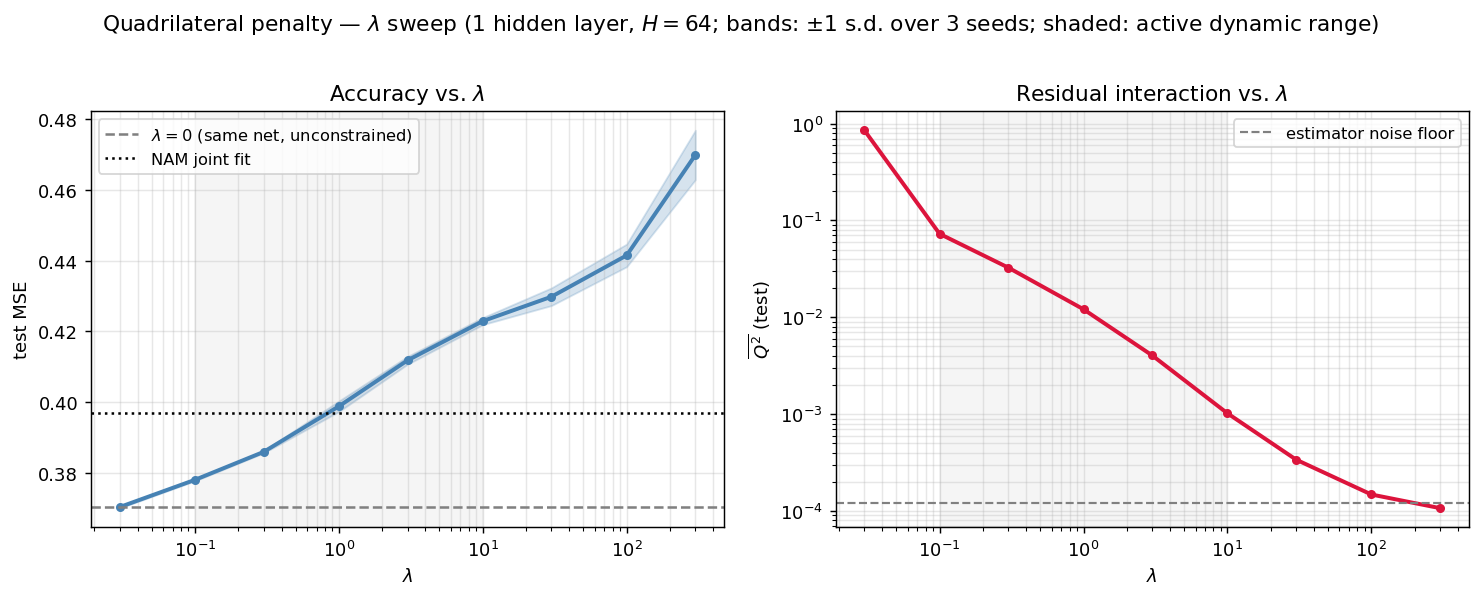}
\caption{\textbf{Accuracy--interaction trade-off under the quadrilateral penalty} (single hidden layer, $H=64$; mean $\pm$ s.d.\ over 3 seeds). \emph{Left}: test MSE versus $\lambda$; the dashed line is the identical architecture at $\lambda=0$ and the dotted line a NAM of matched width trained from scratch. \emph{Right}: residual interaction $\overline{Q^2}$ versus $\lambda$ (log--log). Interaction falls by nearly four orders of magnitude, from $0.86$ to $1.1\times10^{-4}$, while MSE rises from $0.370$ to $0.470$; at $\lambda=0.1$, $91\%$ of the interaction mass is removed for $+0.008$ MSE. The right curve saturates near $10^{-4}$, the noise floor of the stochastic estimator rather than exact additivity.}
\label{fig:sweep}
\end{figure}

\paragraph{Reproducibility.} Code reproducing every experiment, table, and figure---the loss, all training routes, the balance heuristic, the Hyperband protocol, and the surrender-curve instrumentation---is available at \texttt{github.com/AntonioDiCecco/quadrilateral-loss}.

\section{What the Loss Reveals: a Dial and an Observable}
\label{sec:reveals}
The single coefficient $\lambda$ turns additivity into a training-time dial; the coordinate-wise decomposition of the statistic turns interaction into an online observable; and driving the statistic down determines when one-dimensional plots earn an interpretation. This section develops the three in turn.

\subsection{The Trade-off is Strongly Asymmetric}
\label{sec:sweep}

Figure~\ref{fig:sweep} reports the sweep $\lambda\in\{0, 0.1, 0.3, 1, 3, 10, 30, 100, 300\}$. Three observations.

\emph{Most interactions are cheap to remove.} At $\lambda=0.1$ the interaction statistic drops from $0.855$ to $0.073$---a $91\%$ reduction---while test MSE moves from $0.3703$ to $0.3780$. By $\lambda=10$ interactions are down three orders of magnitude ($1.0\times10^{-3}$) at MSE $0.423$, still short of the full additive cost. The dense optimum evidently encodes its function with substantial interaction redundancy: alternative, nearly-additive parameterizations of almost the same function exist nearby in function space, and a weak penalty suffices to select them. The economically meaningful interactions---those the network genuinely needs---are concentrated in the last order of magnitude of suppression, which is where the accuracy price is paid.

\emph{The useful dynamic range of $\lambda$ is narrow.} Both curves are steep only for $\lambda\in[0.1,10]$; below, the penalty is inert, and above, the interaction estimate saturates at ${\sim}10^{-4}$. This saturation is a property of the estimator, not the function: with finite batches and $|S|=4$ sampled coordinates per step, the stochastic penalty cannot certify exact additivity, only drive its estimate to the noise floor. Exactness requires the crystallization step of Section~\ref{sec:exact}.

\emph{Near-additive dense networks can match structural NAMs.} At $\lambda=1$ the single-layer penalized network attains MSE $0.399$ with residual interaction $1.2\times10^{-2}$---statistically indistinguishable from the from-scratch NAM (${\approx}0.39$, Table~\ref{tab:multi}). The ``almost additive'' regime is thus a legitimate operating point rather than a no-man's-land, and there is a principled reason to prefer it that goes beyond accuracy. Exact additivity pushes complexity \emph{into} the components: the best additive projection of a genuinely interacting target can demand shape functions of extreme roughness---the observation that led the NAM authors to engineer ExU activations for ``jumpy'' univariate fits \citep{agarwal2021nam}---and a shape function jagged enough to absorb what interactions used to express is formally one-dimensional but no longer humanly readable, which defeats the purpose of the constraint. This is the expressivity paradox of additive models: perfect additivity and interpretable components can be in tension. The paradox has a classical pedigree. The Kolmogorov--Arnold superposition theorem \citep{kolmogorov1957representation} shows that \emph{every} continuous multivariate function is a finite superposition of univariate functions---so one-dimensionality of the building blocks, by itself, carries no interpretability guarantee whatsoever---and the univariate functions that make the theorem true are necessarily wild: they cannot in general be taken smooth \citep{vitushkin1954hilbert}, and the standard constructions produce continuous but fractal-like inner functions. The theorem is thus the extreme statement of the trade our sweep exhibits in miniature: multivariate complexity does not disappear when arity is constrained, it is displaced into the regularity of the components. A GAM is far weaker than a full superposition---one layer, no outer function---so it cannot absorb everything; but its best additive projection of an interacting target already sits on the same slope, and the roughness that ExU activations were engineered to reach is that displaced complexity surfacing. Soft enforcement dissolves the dilemma from the side: at moderate $\lambda$ the model retains a small interaction budget precisely where forcing additivity would deform the components most, and the resulting sections stay smooth while the quadrilateral statistic certifies, quantitatively, how far from additive the model still is. Near-additivity with a measured residual can be more interpretable than exact additivity with wild components.

\paragraph{A regularization sweet spot on small data.} The California Housing sweep is monotone: every unit of additivity costs accuracy. But Section~\ref{sec:multidataset} shows that on smaller datasets the additive \emph{class} outperforms the dense network---which predicts that there the trade-off curve should dip \emph{below} the unconstrained baseline before rising. It does, with a strength that itself depends on the dataset (Figure~\ref{fig:sweetspot}; ten seeds, paired per seed since penalty runs share split and initialization with their baseline). On Wine the effect is unambiguous: test MSE falls from $0.480$ at $\lambda=0$ to $0.454$ at $\lambda=3$, the paired improvement holds on \emph{ten of ten} seeds for every $\lambda\in[0.3,3]$ ($-0.023\pm0.009$ at $\lambda=1$), the interaction statistic drops an order of magnitude, and even $\lambda=30$ remains at or below the baseline on seven of ten seeds. On Concrete a ten-seed replication demotes our initial three-seed reading: the dip survives only at the smallest penalty ($\lambda=0.3$, $-0.8\pm1.4$ paired, eight of ten seeds) and $\lambda\ge3$ already costs accuracy---the earlier, larger dip was partly seed luck. The honest summary is therefore conditional rather than universal: in the small-$n$ regime there \emph{can} exist $\lambda$ at which the model is simultaneously more accurate and more additive than the unconstrained one, decisively so on Wine, marginally on Concrete, and where it happens the quadrilateral penalty is not a constraint purchased with accuracy but a regularizer paid for by nothing---the interactions were the overfitting.

\begin{figure}[t]
\centering
\includegraphics[width=.9\linewidth]{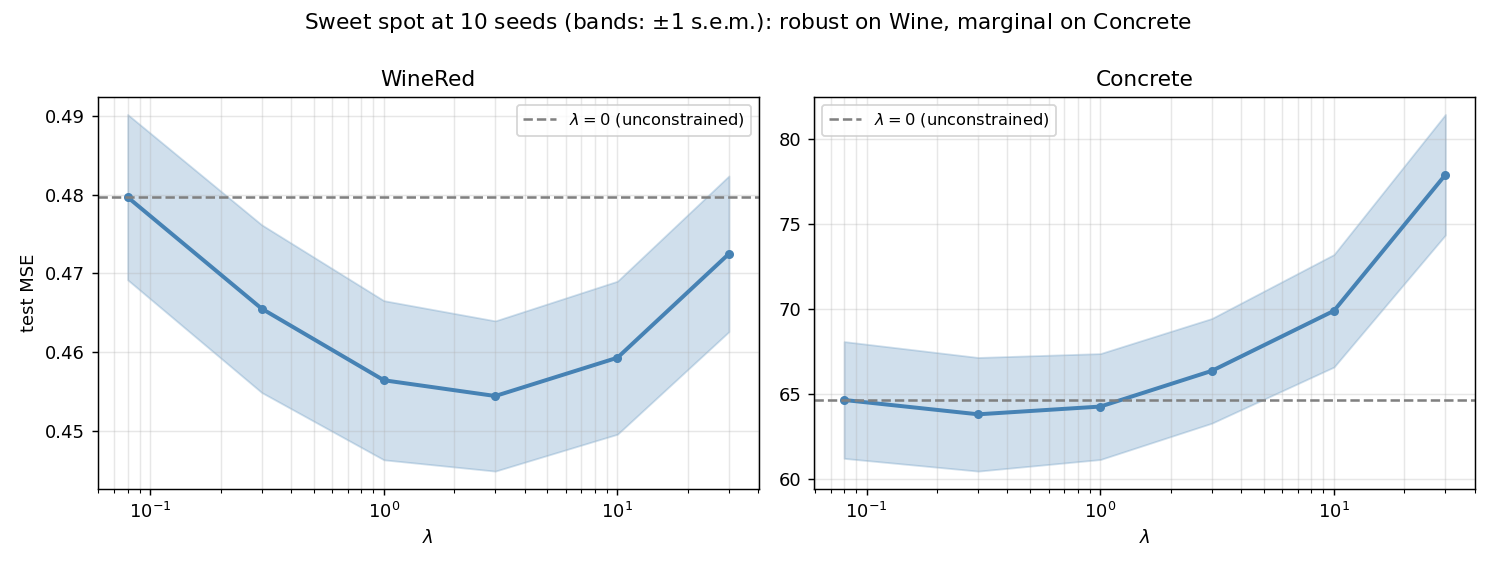}
\caption{\textbf{Win--win regime of the quadrilateral penalty} on the two datasets where the additive class beats the dense network (2-hidden-layer net, 10 seeds, bands $\pm$1 s.e.m.). On Wine the dip below the unconstrained baseline (dashed) is seed-universal for $\lambda\in[0.3,3]$; on Concrete it is marginal and confined to the smallest penalty. Where the dip exists, the penalty regularizes rather than trades.}
\label{fig:sweetspot}
\end{figure}

\subsection{Surrender Curves: Interaction as a Training-Time Observable}
\label{sec:surrender}

Because the quadrilateral statistic decomposes by coordinate, the penalty doubles as a free online diagnostic: tracking $\E[Q_i^2]$ per feature during training yields \emph{surrender curves} showing which features' interactions resist the regularizer and which capitulate (CalHousing panel of Figure~\ref{fig:surrenderall}). This is information no structurally constrained model can emit---a NAM never had interactions to surrender---and it turns post-hoc interaction detection \citep{sobol2001,lundberg2017shap} into a live training signal.

\begin{table}[h]
\centering
\small
\begin{tabular}{lcccc}
\toprule
Feature & $\E[Q_i^2]$ at warmup end & $\E[Q_i^2]$ at convergence & Reduction & Rank at convergence \\
\midrule
MedInc    & $0.090$ & $8.0\times10^{-4}$ & $113\times$  & 2 \\
HouseAge  & $0.073$ & $4.8\times10^{-4}$ & $153\times$  & 7 \\
AveRooms  & $0.592$ & $8.9\times10^{-4}$ & $667\times$  & 1$^{*}$ \\
AveBedrms & $0.662$ & $6.6\times10^{-4}$ & $1006\times$ & 4 \\
Pop       & $0.022$ & $3.2\times10^{-4}$ & $69\times$   & 8 \\
AveOccup  & $0.011$ & $1.9\times10^{-3}$ & $6\times$    & 1 \\
Lat       & $0.899$ & $6.7\times10^{-4}$ & $1342\times$ & 3 \\
Lon       & $0.870$ & $5.7\times10^{-4}$ & $1518\times$ & 5 \\
\midrule
Total & $3.22$ & $6.3\times10^{-3}$ & $511\times$ & \\
\bottomrule
\end{tabular}
\caption{Per-feature interaction $\E[Q_i^2]$ on the test set at the end of the unconstrained warmup (epoch 15, the peak of freely grown interaction) and at convergence (mean of the last five checkpoints), with the reduction factor achieved by the penalty. The largest interactions (Lat, Lon, the ratio features) are suppressed by three orders of magnitude, while AveOccup---smallest at warmup---is reduced only $6\times$ and ends as the largest residual ($^{*}$AveRooms ranks first only through its transient spikes; on the smoothed tail AveOccup leads). On this run (seed 0) warmup magnitude and final residual are uncorrelated ($\rho_s=-0.05$); Figure~\ref{fig:surrenderseeds} shows the seed-replicated picture.}
\label{tab:surrender}
\end{table}

The curves corrected our own prior. We expected latitude and longitude---the carriers of the dataset's dominant interaction---to be the last holdouts; instead they surrender to ${\sim}7\times10^{-4}$, while the stubborn features are average occupancy (${\sim}2\times10^{-3}$, drifting upward late) and median income. Two readings, not mutually exclusive. First, $\E[Q_i^2]$ is a variance-weighted measure evaluated on random data pairs: the heavy-tailed ratio features (rooms, bedrooms, occupancy) produce extreme swap pairs whose quadrilateral residuals dominate the average, which also explains their outsized pre-penalty peaks. Second, the geographic interaction, while large, is apparently \emph{easy to trade away}---the network finds nearby near-additive substitutes for it---whereas the occupancy--income interactions are entangled with the task gradient in a way that keeps regenerating them, visible as the late upward drift and the transient re-emergence spikes around epochs 59 and 103. Table~\ref{tab:surrender} quantifies the effect: reduction factors range from $1518\times$ (longitude) down to $6\times$ (occupancy). We caution against reading the strong positive correlation between warmup magnitude and reduction factor as a finding: since the factor is the ratio peak/final, it correlates with the peak mechanically whenever the final residual is independent of the peak---and independence is precisely what we observe (below). Either way, the operational point stands: which interactions are expensive to remove is an empirical property of the optimization, not something readable off post-hoc importance scores, and the surrender curves measure it directly during training.

\paragraph{A dynamical-systems analogy.} We note, with the caution due to any cross-field metaphor, that the structure of the finding echoes KAM theory \citep{arnold1963kam}. There, an integrable Hamiltonian---the mechanical analogue of a separable, interaction-free system---is perturbed, and the question is which invariant tori survive; the celebrated answer is that survival is governed not by any notion of a torus's size but by an arithmetic property of its frequencies (a Diophantine non-resonance condition), with resonant tori destroyed at arbitrarily small perturbation. Our setting runs the perturbation in the opposite direction---the penalty pushes an interacting system \emph{toward} separability---but exhibits the same signature: which interactions survive the perturbation is decided not by their magnitude but by a structural property of how they are entangled with the task gradient, a property the magnitude ranking barely predicts. We do not claim a formal correspondence; we record the analogy because it correctly reframes the question the surrender curves answer---from ``how big is this interaction?'' to ``how resonant is it with what the loss needs?''---and because it suggests that a quantitative stubbornness criterion, playing the role of the Diophantine condition, may be identifiable.

\paragraph{Replication across datasets and seeds: magnitude is a weak, unstable predictor of stubbornness.} Repeating the experiment on Wine, Abalone, and Boston (same protocol, per-dataset batch sizes as in Section~\ref{sec:multidataset}) and---crucially---across five seeds per dataset revised our own first reading, twice. On the initial seed, the rank correlation between a feature's warmup interaction magnitude and its final residual appeared to vanish on all four datasets ($|\rho_s|\le0.12$), suggesting clean decoupling. Five seeds show that this was a fortunate draw: per-seed correlations range from $-0.26$ to $+0.81$, with per-dataset means of $+0.37\pm0.31$ (CalHousing), $+0.20\pm0.26$ (Wine), $+0.39\pm0.27$ (Abalone), and $+0.05\pm0.14$ (Boston); pooling all twenty runs, the grand mean is $\rho_s=+0.25\pm0.29$, significantly positive (Wilcoxon $p=0.002$) but weak (Figure~\ref{fig:surrenderseeds}). The corrected claim is therefore not decorrelation but \emph{unreliability}: initial interaction magnitude carries some signal about final stubbornness, yet so little---and so seed-dependent---that on any single training run the magnitude ranking routinely inverts (seed-level $\rho_s$ crosses zero on three of four datasets). The identities of the stubborn features remain dataset-specific and plausible (shucked and whole weight on Abalone, whose near-collinear relation to total weight is genuinely interactive; rooms and \%lower-status on Boston), consistent with stubbornness being a real feature-level property whose relation to raw magnitude is loose. The practical consequence for anyone using post-hoc interaction scores to decide which pairwise terms a GA$^2$M should keep survives in weakened but usable form: the magnitude ranking explains at best a small fraction of the retention ranking's variance, and only an online measurement like the surrender curves, run on the actual training in question, reveals which interactions that run will keep.

\begin{figure}[t]
\centering
\includegraphics[width=.72\linewidth]{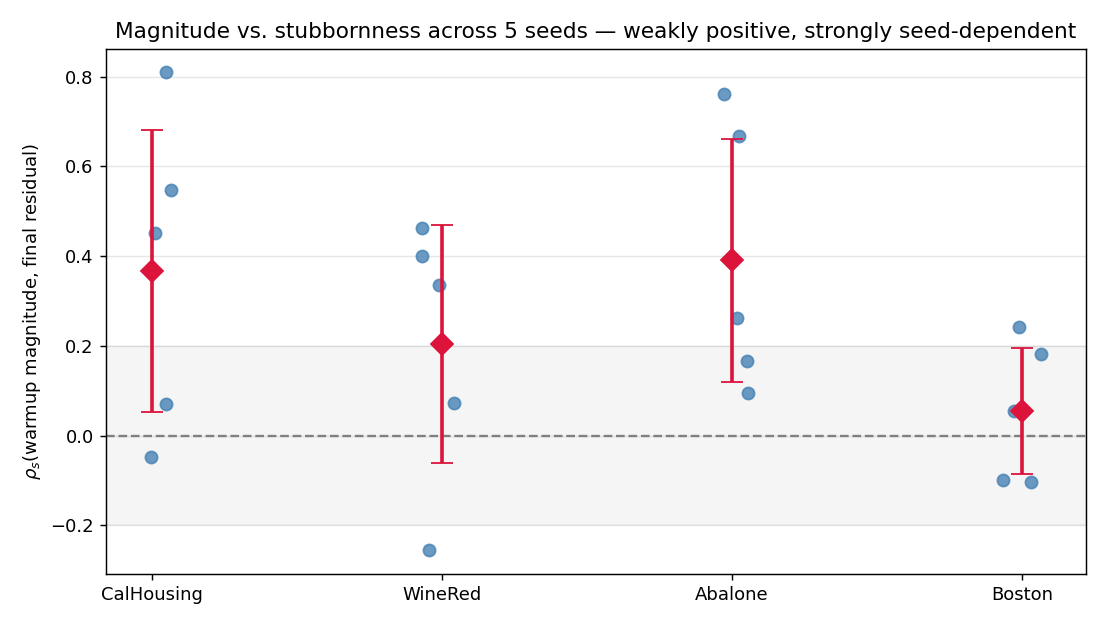}
\caption{\textbf{Seed replication of the magnitude--stubbornness relation.} Each dot is one training run's Spearman correlation between per-feature warmup interaction magnitude and final residual; red diamonds: per-dataset mean $\pm$ s.d.\ over five seeds. The relation is weakly positive on average ($+0.25\pm0.29$ pooled, Wilcoxon $p=0.002$) and crosses zero within seeds on three of four datasets: a single run's ranking is unreliable, which is itself the operational finding.}
\label{fig:surrenderseeds}
\end{figure}

\begin{figure}[t]
\centering
\includegraphics[width=.92\linewidth]{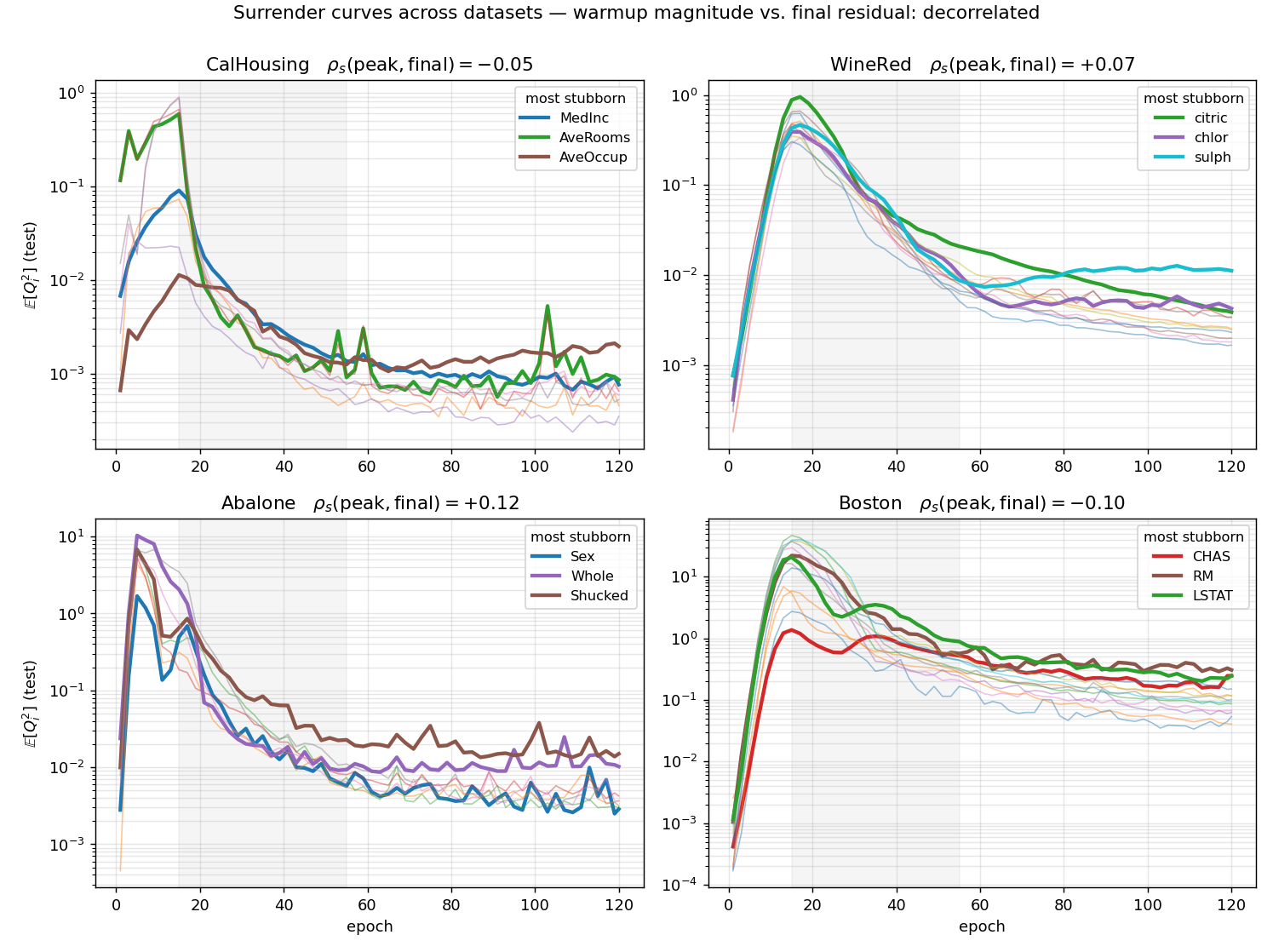}
\caption{\textbf{Surrender curves on four datasets} (three most stubborn features highlighted per panel; shaded band = penalty annealing). Panel titles report $\rho_s$ between warmup magnitude and reduction factor; panel curves are shown for seed 0; the seed-replicated statistics are in Figure~\ref{fig:surrenderseeds}.}
\label{fig:surrenderall}
\end{figure}

\subsection{When Do Sections Become Shape Functions?}
\label{sec:sections}

Figure~\ref{fig:responses} shows, for seven values of $\lambda$, the one-dimensional sections $x_i \mapsto f_\theta(x_i, \bar{x}_{-i})$ with the remaining coordinates at their means. These are the objects practitioners routinely plot as ``the effect of feature $i$.'' The figure makes the epistemic status of such plots visible. For MedInc and HouseAge the sections are essentially invariant across the entire $\lambda$ range, including $\lambda=0$: their effects are additive already in the unconstrained model, and the penalty has nothing to remove. For latitude and longitude---the interaction carriers---the $\lambda=0$ section is the most irregular curve in each panel and its shape drifts as $\lambda$ grows, converging only around $\lambda\gtrsim10$ to a stable profile shared across penalization strengths. The average-ratio features (AveRooms, AveBedrms, AveOccup) show a related effect in their sparse tails, where apparent structure at $\lambda=0$ flattens under the penalty; caution is warranted there regardless, as the standardized tails beyond ${\pm}2\sigma$ contain few data points.

The interpretation is not that the penalty ``smooths'' the sections. It is that at $\lambda=0$ a section is a one-dimensional slice through a strongly interacting function, and its shape depends on the arbitrary choice of conditioning point $\bar{x}_{-i}$---slicing at a different context would give a different curve, which is precisely what interaction means. Only as $\overline{Q^2}\to 0$ does the section become invariant to the conditioning point and thus interpretable as a shape function $f_i$. The $\lambda$-convergence visible in Figure~\ref{fig:responses} is therefore a direct visualization of sections \emph{acquiring} well-definedness. It is the training-time dual of a phenomenon \citet{bordt2023shapley} exhibit post-hoc: their Shapley-value partial-dependence scatter collapses onto the component functions as the \emph{explanation order} $n$ grows toward $d$, ours as the \emph{interaction budget} shrinks toward zero---two paths to the same identified object. A sharper version of this diagnostic---plotting, at each $\lambda$, a bundle of sections conditioned at multiple context points and watching the bundle collapse---would quantify section validity per feature; we propose it as a cheap sanity check for any practitioner reading partial-dependence-style plots off a dense model.

\section{Routes to Exact Additivity}
\label{sec:routes-exact}
Near-additivity suffices for many uses, but exactness is sometimes the point. This section covers how to get it: a brief account of crystallization as post-processing for the penalized model, then the two full modeling routes---classical backfitting with neural smoothers, and reading an additive predictor out of a single shared network, which will turn out to be not a separate design but the hard-projection counterpart of the quadrilateral penalty itself.

\subsection{From Near to Exact: Crystallization as Optional Post-Processing}
\label{sec:exact}

The penalized model is additive in behavior, not in structure: the stochastic estimator bottoms out near $10^{-4}$, so exactness, where required, needs one further step. \emph{Crystallization} converts the penalized dense network into a masked NAM topology---first-layer units assigned to features by the saliency $|w\odot\nabla_w\mathcal{L}|$, second-layer units grouped under per-feature capacity caps, cross-group weights zeroed, then fine-tuned under the masks---after which the additive decomposition holds to machine precision. On California Housing this recovers test MSE $0.395\pm0.006$ over three seeds, matching a NAM trained from scratch ($0.388\pm0.003$, Table~\ref{tab:multi}): pruning is violent in parameter space but benign in function space, because the function being re-encoded is already additive. The contrast that makes this a finding rather than a recipe is our weight-space alternative, WTA decay, which constrains \emph{parameters} during training (non-winning weights decay even while carrying signal) and plateaus at $0.439$ (seed-matched single run) after the identical pruning protocol: when the target is a functional property, constraining behavior first and structure second dominates constraining the weights directly. Two silent failure modes surfaced here---Adam's stale moments moving pruned weights off their masks (gap inflating from $10^{-6}$ to $3\times10^{-2}$; fix: re-project after every step), and unconstrained winner-take-all grouping collapsing onto a single feature (fix: capacity caps with a minimum quota)---and are dissected in Section~\ref{sec:failures}. We emphasize the demotion implied by this subsection's title: crystallization is an optional exactness post-processing, not a modeling route, and we exclude it---together with its weight-space ablation, WTA decay, which exists only as the comparator for the behavior-first claim---from the cross-dataset benchmark accordingly; on small datasets the multi-stage pipeline is fragile, and part of the deficit we initially attributed to it traced, on re-examination, to a batch-size protocol artifact of the kind Section~\ref{sec:multidataset} documents for Boston.

\subsection{Classical Backfitting with Neural Smoothers}
\label{sec:backfit}

A classical baseline predating neural additive modeling by decades calibrates what joint fitting buys.

\paragraph{Backfitting.} The naive alternative to joint fitting---train $d$ single-feature networks independently on $y$ and stack them linearly---fails badly ($0.499$, barely above linear regression) despite living in the same additive class, because each branch learns the \emph{marginal} $\E[y|x_i]$, which under correlated features double-counts effects that the additive class needs as \emph{partials}; the fitted stacking weights betray it with large negative coefficients subtracting the overlap. The classical repair is the backfitting algorithm of \citet{hastie1986gam}: cyclically refit each component on the partial residual $y - \alpha - \sum_{j\ne i} f_j(x_j)$, with mean-centering of each component for identifiability. Instantiated with the same single-feature networks as smoothers (8 inner epochs per component per sweep, warm-started across sweeps), backfitting descends monotonically from the stacking level, crosses the from-scratch NAM at roughly sweep 20, and plateaus at $0.380\pm0.006$ over three seeds (Table~\ref{tab:multi}). The trajectory is exactly the theory: partial residuals progressively reassign shared signal among correlated features, converging to the joint additive fit---backfitting is coordinate descent in the space of additive functions, and each sweep provably decreases the training objective because updating $f_i$ leaves every other component \emph{exactly} fixed. We do not read the final $0.380$ vs.\ $0.388$ margin as a systematic advantage over the NAM: the backfitting ensemble has roughly $8\times$ the first-layer parameters per feature, and the difference is within seed and tuning noise. The robust finding is the shared plateau: masked NAM, penalty-then-crystallize, and backfitting---three entirely different procedures---all land at ${\approx}0.39$, identifying the additive limit of the dataset itself.

\subsection{A Single Network as Its Own Additive Model}
\label{sec:sections-model}

The routes above either dedicate parameters per feature (NAM, backfitting) or train a dense network and discard its cross-feature weights (crystallization). A third possibility uses one dense network $g_\theta:\R^d\to\R$ of ordinary architecture and \emph{defines} the additive predictor through its mean-conditioned sections:
\begin{equation}
\label{eq:secmodel}
F(x) \;=\; g_\theta(\bar{x}) \;+\; \sum_{i=1}^{d}\Big[\, g_\theta(x_i,\bar{x}_{-i}) - g_\theta(\bar{x}) \,\Big],
\end{equation}
where $\bar{x}$ is the training mean. Each bracketed term depends on a single coordinate, so $F$ is exactly additive by construction, for any $\theta$; the question is only how to fit $\theta$.

\paragraph{Sequential backfitting on shared weights diverges.} The direct transplant of backfitting---cyclically fit section $i$ on partial residuals by training $g_\theta$ on mean-masked inputs $(x_i,\bar{x}_{-i})$---reaches $0.54$ within ${\sim}15$ sweeps and then diverges catastrophically (test MSE ${>}10^{3}$ by sweep 40). The failure is structural rather than a matter of step sizes. Classical backfitting converges because its components are parameter-disjoint: refitting $f_i$ leaves every $f_j$, $j\ne i$, untouched, which is what makes the procedure a genuine coordinate descent with stationary partial residuals within each update. With shared weights, training section $i$ moves \emph{all} sections simultaneously; the partial residuals computed at the start of a sweep are stale before the sweep ends, the error is baked into the next component's regression target, and the feedback loop amplifies. The convergence guarantee of \citet{buja1989linear} dies precisely where the coordinates cease to be orthogonal in parameter space.

\paragraph{Penalize or project: the section model and the quadrilateral loss are two faces of one idea.} The two routes we introduce are more closely related than their descriptions suggest, and the relation is an identity. For $d=2$, the gap between the dense network and its section read-out is exactly a quadrilateral anchored at the mean: $g(x)-F(x)=g(x_1,x_2)-g(x_1,\bar{x}_2)-g(\bar{x}_1,x_2)+g(\bar{x}_1,\bar{x}_2)=Q(x,\bar{x})$; for general $d$ it is the sum of the anchored decomposition's terms of order two and higher. Additive functions are precisely the fixed points of the read-out ($g$ additive $\Rightarrow F[g]=g$, as noted above), and the quadrilateral statistic characterizes that fixed-point set. The two methods are then the classical soft and hard treatments of the same constraint: the quadrilateral loss \emph{penalizes} the distance from the fixed-point set, with the partner point sampled from the data so that the penalty weights interactions by the interventional measure; the section model \emph{composes with the projection}, discarding the interaction component by fiat rather than suppressing it. In the $\lambda\to\infty$ limit they share the same target---minimum MSE over the additive class, whose population optimum is the additive $L^2$ projection---but they do not coincide as procedures, and the distinction is visible in our numbers. The penalty forces the \emph{network itself} to become additive, a severe constraint on a dense parametrization; the projection leaves the network free and additivizes only the read-out. At finite width the two parametrizations of the additive class have different effective capacity, and the section model's empirical edge over the crystallized penalty route ($0.376$ vs.\ $0.395$ on California Housing) is precisely that capacity difference showing. The remaining asymmetries are consequences of soft versus hard, not separate designs: the projected model earns the exact guarantee everywhere but its gradients only ever flow on the cross of one-dimensional manifolds through the anchor, while the penalized model enforces $Q=0$ only on the data's swap distribution, trains on the full manifold, and buys accuracy with its small measured budget (Table~\ref{tab:multi}).

\paragraph{Joint training of the shared-section model is stable and wins.} The repair is to abandon the sequential structure altogether and minimize the joint objective $\E[(F(x)-y)^2]$ directly, backpropagating through all $d{+}1$ masked forward passes of \eqref{eq:secmodel} in every step. There are no stale residuals by construction. With the same $64{+}64$ architecture and budget as the dense baseline, this attains test MSE $\mathbf{0.376\pm0.004}$ over three seeds---the best exactly additive neural model in our study, ahead of both the from-scratch NAM ($0.388\pm0.003$) and separate-network backfitting ($0.380\pm0.006$), in an architecture-matched comparison with the dense reference (Table~\ref{tab:multi}).

Why does an exactly additive predictor built from shared weights beat one built from disjoint branches? Our working hypothesis is implicit basis sharing: the shape functions of this dataset share qualitative structure (monotone segments, saturations), and a single network can learn that structure once and reuse it across sections, whereas disjoint branches must each rediscover it from their share of the data. On this reading, \eqref{eq:secmodel} is the limiting case of basis sharing anticipated in Section~\ref{sec:landscape}: the whole network is the basis, the sections are the read-outs. The price is computational: $d{+}1$ forward passes per example, linear in $d$, which is acceptable at $d=8$ and untested at scale. A further property we note but do not yet exploit: the trained $\theta$ supports two read-out modes---the additive $F(x)$ of \eqref{eq:secmodel} and the ordinary dense forward $g_\theta(x)$---and the relationship between them (in particular, whether training through sections implicitly additivizes the dense forward pass, as the quadrilateral penalty does explicitly) is a natural follow-up experiment.

\section{Across Datasets: Rankings Are Regime-Dependent}
\label{sec:multidataset}

The single-dataset picture invites overgeneralization in both directions. We therefore repeated the converging routes on five further tabular benchmarks---Wine Quality (red), Abalone, Boston Housing, Concrete compressive strength, and daily Bike rentals---under a uniform protocol (same architectures, Adam $10^{-3}$, three seeds, 80/20 splits; batch size reduced on the smallest datasets---to 64 on Boston and Bike, 128 on Concrete---where on Boston $n_{\text{train}}=404$ would otherwise yield one gradient step per epoch and spuriously catastrophic neural results, a protocol artifact worth flagging because it initially masqueraded as method failure). We additionally include TEAM \citep{dicecco2024team,dicecco2025xai_ch6}, a Tree Ensemble Additive Model: a bagged ensemble of depth-one gradient-boosted trees. Because every stump splits on a single feature, each boosting stage---and hence the bagged ensemble---is \emph{exactly} additive by construction, in the boosted-stumps-as-GAM lineage of \citet{friedman2001greedy} and the EBM family \citep{lou2012intelligible}; its mean-conditioned marginal-effect read-out coincides with the section read-out of Eq.~\ref{eq:secmodel}, making TEAM the tree-based twin of our shared-network section model. We report TEAM at 5 bags of 1{,}000 stumps; the repository default of 100 stumps underfits noticeably (e.g.\ $0.48$ on California Housing), so capacity matters for fair comparison.

\begin{table}[t]
\centering
\small
\setlength{\tabcolsep}{4pt}
\resizebox{\textwidth}{!}{%
\begin{tabular}{lcccccc}
\toprule
& CalHousing & WineRed & Abalone & Boston & Concrete & Bike \\
& {\scriptsize $n{=}20433,d{=}8$} & {\scriptsize $n{=}1599,d{=}11$} & {\scriptsize $n{=}4177,d{=}8$} & {\scriptsize $n{=}506,d{=}13$} & {\scriptsize $n{=}1030,d{=}8$} & {\scriptsize $n{=}728,d{=}13$} \\
\midrule
Dense (non-additive) & $\mathit{0.282 \pm .002}$ & $0.534 \pm .008$ & $4.74 \pm 1.14$ & $\mathit{9.90 \pm 2.31}$ & $64.2 \pm 2.7$ & $\mathit{0.373 \pm .007}$ \\
\midrule
Quad soft, $\lambda{=}1$ (ours, near-additive) & $\mathbf{0.328 \pm .004}$ & $0.448 \pm .029$ & $\mathbf{4.40 \pm .48}$ & $\mathbf{10.65 \pm 3.08}$ & $70.4 \pm 3.7$ & $\mathbf{0.377 \pm .016}$ \\
Quad soft, $\lambda$ tuned (ours; median $\lambda^*$) & $0.301 \pm .018$ {\scriptsize(0.06)} & $0.444 \pm .034$ {\scriptsize(2.6)} & $4.38 \pm .49$ {\scriptsize(0.75)} & $10.22 \pm 2.61$ {\scriptsize(0.81)} & $71.4 \pm 4.0$ {\scriptsize(0.75)} & $0.382 \pm .016$ {\scriptsize(0.74)} \\
NAM from scratch & $0.388 \pm .003$ & $0.500 \pm .045$ & $5.05 \pm .56$ & $15.91 \pm 3.94$ & $80.1 \pm 12.2$ & $0.478 \pm .010$ \\
Backfitting & $0.380 \pm .006$ & $0.418 \pm .055$ & $4.55 \pm .44$ & $12.13 \pm 2.68$ & $48.0 \pm 4.4$ & $0.399 \pm .020$ \\
Shared sections (ours) & $0.376 \pm .004$ & $0.441 \pm .058$ & $\mathbf{4.41 \pm .51}$ & $14.45 \pm 3.20$ & $51.9 \pm 2.7$ & $0.419 \pm .013$ \\
TEAM (bagged stumps) & $0.338 \pm .006$ & $\mathbf{0.412 \pm .056}$ & $4.71 \pm .38$ & $13.27 \pm 3.61$ & $\mathbf{32.3 \pm 1.3}$ & $0.425 \pm .009$ \\
\bottomrule
\end{tabular}}
\caption{Test MSE (mean $\pm$ s.d., 3 seeds) across six tabular datasets. Bold: best model of the additive family per dataset (ties within one s.d.\ both bolded); italics: dense reference where it wins. The first quad-soft row is \emph{near}-additive---a fixed $\lambda{=}1$ for all datasets, no per-dataset tuning, with a small measured interaction residual (Table~\ref{tab:contracts})---so its comparison against the exactly additive rows trades a weaker guarantee for accuracy. The second tunes $\lambda$ per dataset by Hyperband-style successive halving on a held-out validation split (nine log-uniform candidates, $9\to3\to1$, retrained on the full training set at the selected value; parenthetical numbers are the median selected $\lambda^*$): it matches the fixed default within noise everywhere except California Housing, where it wins by selecting $\lambda^*\!\approx\!0.06$---that is, by largely abandoning additivity. Target variances: $1.33$, $0.65$, $10.4$, $84$, $279$, $3.7$. On Concrete the best additive model beats the dense network by a factor of two.}
\label{tab:multi}
\end{table}

\begin{table}[t]
\centering
\small
\setlength{\tabcolsep}{6pt}
\begin{tabular}{lllc}
\toprule
Method & Constraint acts on & Additivity guarantee & $\big\langle \overline{Q^2}/\overline{Q^2}_{\mathrm{dense}}\big\rangle_{\text{6 datasets}}$ \\
\midrule
Dense network & --- & none & $1.00$ (reference) \\
NAM & structure (masks) & exact, by architecture & $0$ \\
TEAM & structure (stumps) & exact, by architecture & $0$ \\
Backfitting & fitting algorithm & exact, by construction & $0$ \\
Shared sections (ours) & read-out & exact, by read-out & $0$ \\
Quad $\to$ crystallize (ours) & behavior, then masks & exact after pruning & ${\sim}10^{-6}$ \\
Quad soft, $\lambda{=}1$ (ours) & behavior & near-additive, \emph{measured} & $\mathbf{0.15}$ \\
\bottomrule
\end{tabular}
\caption{\textbf{Additivity contracts across methods}, with the interaction residual in a single comparable column: the quadrilateral statistic normalized by the dense model's ($\overline{Q^2}$ is in units of $y^2$, so the raw values are not comparable across datasets; the ratio is dimensionless), averaged over the six benchmark datasets. Exactly additive methods are $0$ identically and need no measurement; the soft model is the only method whose additivity is a measured quantity, and at the default $\lambda=1$ it retains on average $15\%$ of the dense interaction mass (per dataset: CalHousing $0.016$, Abalone $0.027$, Bike $0.17$, Concrete $0.21$, Boston $0.22$, Wine $0.26$), i.e.\ removes $74$--$98\%$, while paying the accuracy shown in Table~\ref{tab:multi}.}
\label{tab:contracts}
\end{table}

Table~\ref{tab:multi} also includes the penalized model itself, taken \emph{soft}: the dense network trained with the quadrilateral loss at a fixed $\lambda=1$ for every dataset---the central recommendation of the balance heuristic developed alongside the table (median $\lambda_{\mathrm{bal}}\approx1.5$ across the six datasets); we deliberately avoid per-dataset $\lambda$ selection, which without a validation protocol would leak test information. It is the strongest model of the additive family on California Housing (where it also edges TEAM), Boston, and Bike (where it matches the dense network to within noise), and tied-strongest on Abalone, at the price of a weaker contract: its additivity is near-exact and \emph{measured} rather than structural. This is also where our approach parts ways with the GA$^2$M lineage \citep{lou2013ga2m,yang2021gaminet}. GA$^2$M-style models prescribe the interaction order from outside---main effects plus an explicitly screened list of pairwise terms---so the analyst decides in advance which interactions may exist and of what order. The quad-soft model inverts this: no order is prescribed; the optimization retains whatever interaction mass the task loss defends against a global, continuously tunable budget, and the retained mass is then quantified (per coordinate, online) by the same statistic that enforced the budget. One buys structural semantics and enumerable terms; the other buys adaptivity and a certificate. The higher-order probes of Section~\ref{sec:method} suggest the two can meet---using the quadrilateral statistics to \emph{discover} which pairs a GA$^2$M should include---but we do not pursue that here.

\paragraph{Choosing $\lambda$: a dimensional argument and a balance heuristic.} Two observations remove most of the guesswork. First, the task loss and the penalty have the same units---$Q_i$ is a signed combination of function values, so $\E[Q_i^2]$ is measured in $y^2$ exactly like the MSE---hence $\lambda$ is \emph{dimensionless}, and a fixed $O(1)$ value can transfer across datasets whose target variances differ by orders of magnitude (here $0.65$ to $279$), as the sweet-spot figure confirms. Second, the natural scale within that $O(1)$ range is the \emph{balance point} $\lambda_{\mathrm{bal}} = \mathrm{MSE}(0)/\overline{Q^2}(0)$, the coefficient at which the two loss terms weigh equally for the unconstrained model: it is cheap to compute (one $\lambda=0$ run plus one pass of the statistic) and encodes the right diagnostic---a model whose interaction mass is large relative to its error is using interactions load-bearingly and warns against pushing, while a small ratio flags them as prunable. Across our six datasets $\lambda_{\mathrm{bal}}$ spans $0.18$ (CalHousing, where indeed any $\lambda>0.1$ already costs accuracy) through $1.05$ (Bike), $1.15$ (Boston), $1.9$ (Abalone), ${\approx}2.2$ (Concrete), to $4.4$ (Wine), and on every dataset with a measured sweep the optimum falls at or below the balance point, in $[0.1,1]\cdot\lambda_{\mathrm{bal}}$ (Wine $0.7\lambda_{\mathrm{bal}}$, CalHousing ${\sim}0.5\lambda_{\mathrm{bal}}$, Concrete at the low end, ${\sim}0.15\lambda_{\mathrm{bal}}$). The benchmark's fixed $\lambda=1$ is the heuristic's central recommendation (median $\lambda_{\mathrm{bal}}\approx1.5$ across the six datasets); our earliest experiments had used $\lambda=3$, the aggressive upper edge of the Wine sweet-spot plateau, and switching to the heuristic default improved or matched every dataset (California Housing most visibly, from $0.364$ to $0.328$). As an out-of-sample test of the heuristic rather than a fit to it: Bike's $\lambda_{\mathrm{bal}}=1.05$ predicts $\lambda^*\!\approx\!0.5$--$1$, and $\lambda=0.5$ indeed yields $0.375\pm0.016$, matching the dense network ($0.373$). Per-dataset heuristic selection would improve the benchmark row further still; we keep the single fixed value to preserve its no-tuning reading.

\paragraph{$\lambda$ prices a constraint; do not tune it on accuracy.} Table~\ref{tab:multi}'s tuned row makes the comparison explicit and teaches two things. Practically, accuracy-tuned $\lambda$ (Hyperband on a validation split) buys nothing over the heuristic default on the five small datasets---it matches within noise and, on two, validation noise makes it slightly worse---while the selected $\lambda^*$ track the ordering of $\lambda_{\mathrm{bal}}$. Conceptually, the one dataset where tuning wins is the cautionary tale: on California Housing the tuner selects $\lambda^*\!\approx\!0.06$ and recovers most of the dense model's accuracy by dissolving the very property the penalty exists to buy. $\lambda$ is not a nuisance parameter: it prices a constraint, and tuning it on accuracy alone amounts to asking the market how much it will pay for a constraint it is free to ignore. The answer---nothing, where interactions pay---is correct and useless. The right protocol fixes the interpretability budget first (via the heuristic, or a target residual $\overline{Q^2}$) and lets accuracy adjust, not the reverse.

For calibration, the gap between the additive plateau (${\approx}0.44$ for this width; ${\approx}0.39$ for the deeper two-layer additive models of Table~\ref{tab:multi}) and the unconstrained dense network ($0.284$ for the two-layer model) reflects genuine interaction structure in the data, dominated by the joint dependence of house value on latitude and longitude, which no additive model can capture; this is consistent with the known geography of the dataset \citep{pace1997calhousing}.

\begin{figure}[t]
\centering
\includegraphics[width=\linewidth]{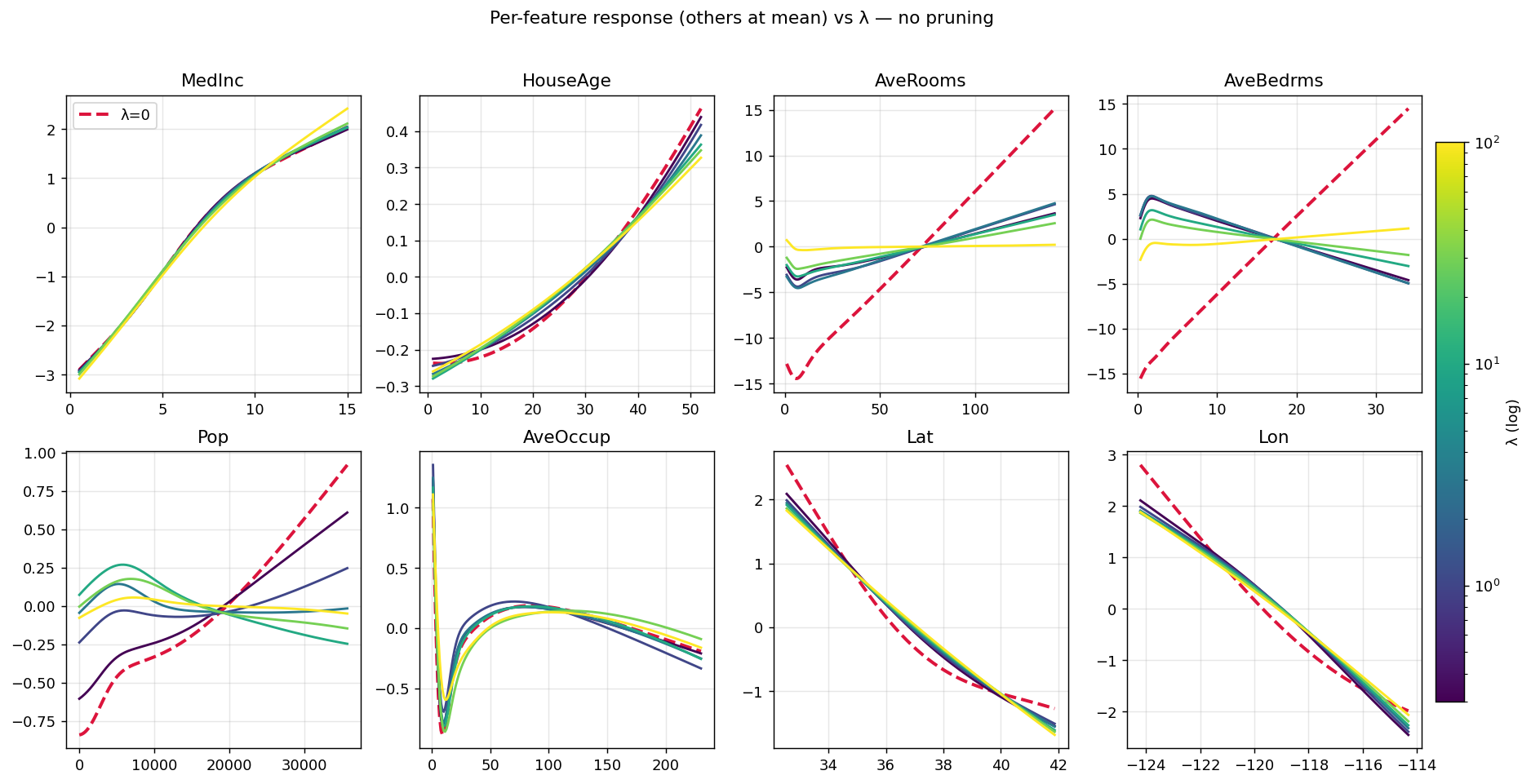}
\caption{\textbf{One-dimensional response sections as a function of $\lambda$}, without pruning: $f_\theta$ evaluated along each feature with all others held at their (standardized) mean, curves centered for comparability. Red dashed: $\lambda=0$; viridis dark$\to$light: increasing $\lambda$. Features with genuinely additive effects (MedInc, HouseAge) have $\lambda$-invariant sections; interaction-carrying features (Lat, Lon) have sections that change shape with $\lambda$ and stabilize only for $\lambda\gtrsim10$. At $\lambda=0$ these sections are conditioning-point artifacts, not feature effects.}
\label{fig:responses}
\end{figure}

Table~\ref{tab:multi} revises several single-dataset conclusions, which is precisely why it exists.

\emph{The additivity tax is not universal.} On California Housing additivity costs ${\sim}35\%$ MSE, but on three of the six datasets the best additive model \emph{beats} the dense network---most dramatically on Concrete, where TEAM halves the dense error: with $n\lesssim 4{,}000$ noisy samples the additive constraint acts as a regularizer, and where the true shape functions carry sharp thresholds (curing age, water--cement ratio) a well-matched additive learner can dominate outright. The framing ``additivity trades accuracy for interpretability,'' which our own Section~\ref{sec:sweep} adopted, is a large-$n$ statement.

\emph{Trees remain formidable on tabular data.} TEAM is the best \emph{exactly} additive model on three of six datasets---and on Concrete the best model overall by a wide margin---and is never far from the front; only the near-additive quad-soft row, with its weaker contract, edges it on the largest dataset. This is consistent with the broader pattern that gradient-boosted trees dominate tabular benchmarks \citep{grinsztajn2022trees}; the interesting refinement is that the dominance survives the restriction to depth one, i.e.\ to the exactly additive subclass. The comparison between TEAM and the shared-section network is the cleanest in the table---identical functional class, identical read-out, differing only in the function approximator---and it splits: trees win where shape functions carry sharp axis-aligned structure (California's geography, Concrete's thresholds), the network wins on Abalone's smooth monotone relationships.

\emph{Why TEAM is hard to beat.} Four ingredients, none exotic, compound. Stumps fit \emph{piecewise-constant} shape functions, which capture the thresholds, plateaus, and saturations that dominate tabular relationships---and that a smooth GELU branch must approximate with finite curvature; California's geographic effects, full of sharp administrative and coastal boundaries, are the clearest beneficiary. Split-based fitting is invariant to monotone feature transforms and therefore indifferent to the heavy tails that visibly destabilize the neural routes on the ratio features. Stagewise boosting is a functional-gradient coordinate descent that allocates its next unit of capacity to whichever feature currently has the largest residual signal---an adaptive budget the fixed per-branch widths of a NAM cannot imitate, and a one-at-a-time discipline that handles correlated features in the same spirit as backfitting. And the whole pipeline has essentially no optimization hyperparameters to mistune: no learning-rate schedule, no batch size, no seed-sensitive initialization, which is worth a great deal precisely in the small-$n$ regime where Table~\ref{tab:multi} shows the elaborate neural pipelines faltering. The flip side, visible on Abalone, is that when the true shape functions are smooth and monotone, step-function bias wastes capacity on staircase artifacts and a smooth approximator wins. A distillation probe confirms that the advantage is representational rather than informational: training our penalized network on TEAM's predictions instead of the labels performs \emph{worse} than training on the labels directly ($0.411$ vs.\ $0.401$ on California Housing), the smooth student's inability to cheaply reproduce the teacher's sharp axis-aligned steps swallowing the entire teacher margin. What TEAM knows, a network can learn from the data; how TEAM represents it, a smooth network cannot afford.

\emph{Variance dwarfs method gaps on small data.} On Boston, seed-to-seed standard deviations ($2.7$--$3.9$) exceed most pairwise method differences; the honest reading is that backfitting leads and the two crystallization routes trail, with everything else statistically entangled. Small-$n$ tabular comparisons without seed bars are not informative, ours included wherever bars overlap.

\subsection{Clinical Tabular: the Regime Where Additivity Pays}
\label{sec:clinical}

The additive lineage earned its reputation in clinical risk, so we close the benchmark with three open clinical cohorts under the identical protocol: the scikit-learn diabetes progression task (regression, $n{=}442$, $d{=}10$), Pima Indians diabetes ($n{=}768$, $d{=}8$), and UCI Heart Disease (Cleveland, $n{=}303$, $d{=}13$); the two binary cohorts are trained as proper classification: every method fits an additive---or, for the dense reference and the penalized model, unconstrained---\emph{logit} under cross-entropy, and we report test cross-entropy and AUROC.

\begin{table}[t]
\centering
\small
\setlength{\tabcolsep}{5pt}
\begin{tabular}{llcccccc}
\toprule
& & Dense & NAM & Backfitting & Sections (ours) & Quad soft $\lambda{=}1$ (ours) & TEAM \\
\midrule
DiabProg & MSE & $0.320 \pm .047$ & $\mathbf{0.284 \pm .019}$ & $0.294 \pm .023$ & $0.296 \pm .025$ & $0.292 \pm .035$ & $0.332 \pm .035$ \\
\midrule
\multirow{2}{*}{Pima} & CE & $0.565 \pm .057$ & $0.485 \pm .041$ & --- & $0.477 \pm .050$ & $0.482 \pm .038$ & $0.476 \pm .034$ \\
& AUROC & $0.812 \pm .030$ & $0.837 \pm .027$ & --- & $0.845 \pm .028$ & $0.838 \pm .027$ & $0.839 \pm .024$ \\
\midrule
\multirow{2}{*}{Heart} & CE & $0.642 \pm .262$ & $0.355 \pm .061$ & --- & $0.352 \pm .063$ & $0.354 \pm .070$ & $0.411 \pm .097$ \\
& AUROC & $0.895 \pm .040$ & $0.923 \pm .022$ & --- & $0.926 \pm .023$ & $0.924 \pm .024$ & $0.905 \pm .031$ \\
\bottomrule
\end{tabular}
\caption{Clinical cohorts. DiabProg is regression (test MSE, 3 seeds; bold: best). Pima and Heart are trained as proper classification---additive \emph{logit}, cross-entropy---and report test cross-entropy and AUROC over \emph{ten} seeds. No entry is bolded on the binary rows deliberately: the additive routes are statistically indistinguishable there (paired differences within one seed s.d.; even the nominal winner flips between seed subsets), while every one of them dominates the dense reference, whose Heart fits are additionally unstable ($\pm.26$). Classical backfitting under a logistic link is the local-scoring algorithm, which we do not implement (dashes).}
\label{tab:clinical}
\end{table}

The result is the paper's smallest-data regime taken to its conclusion: every additive route beats the dense network on every cohort and every metric (Table~\ref{tab:clinical}), and the win--win regime survives the change of loss with room to spare. At $\lambda=1$ the penalized model improves on its unconstrained counterpart on regression ($0.292$ vs.\ $0.310$ MSE, $\lambda_{\mathrm{bal}}=6.6$, the largest in the study) and, on the link scale, in classification---cross-entropy $0.482$ vs.\ $0.565$ on Pima and $0.354$ vs.\ $0.642$ on Heart over ten seeds, AUROC rising in both cases---while retaining a fraction of a percent of the logit-scale interaction mass. Within the additive family the binary rows carry no bold deliberately: at ten seeds the additive routes are statistically tied---paired cross-entropy differences of $0.003$--$0.008$ against seed spreads several times larger, win counts of $6$--$7$ out of $10$---and the tie is robust in an instructive way: TEAM, last on Pima in an earlier three-seed run, is nominally \emph{first} there at ten, the kind of winner-flip that is exactly what a tie looks like through a small-sample keyhole. What survives every replication is the regime claim, not a ranking: at a few hundred samples every additive route dominates the dense reference, and there are too few samples to separate the additive routes from one another. \paragraph{The penalty lives on the link scale.} Nothing in the construction is tied to squared error: for classification the network outputs a logit $z_\theta(x)$, the task loss is cross-entropy, and the quadrilateral penalty is applied to $z_\theta$ directly---driving it to zero makes the \emph{logit} additive, i.e.\ the model a logistic GAM, exactly as classical additive modeling prescribes for generalized links; the same recipe extends to any GLM link. Table~\ref{tab:clinical}'s binary rows are produced this way, and the penalized logit removes $99.7\%$ of the logit-scale interaction mass on both cohorts while \emph{improving} both metrics.

These are open UCI-class cohorts rather than credentialed ICU data; the MIMIC-scale study that the asthma episode of \citet{caruana2015intelligible} calls for---and that Bayesian NAMs have begun \citep{bouchiat2024lanam}, with credible intervals on MIMIC-III mortality shape functions---is the dedicated follow-up flagged in Section~\ref{sec:limits}, where surrender curves would audit which clinical interactions a model refuses to give up.

\section{Do the Routes Agree on the Functions?}
\label{sec:agreement}

Matching test MSE does not imply matching models: additive decompositions are identifiable only up to constants under a product measure, and different procedures could in principle attain similar accuracy with different shape functions when features are correlated. Figure~\ref{fig:allshapes} overlays the centered shape functions of the benchmark's four additive routes together with the penalized model's mean-conditioned sections at four values of $\lambda$.

\begin{figure}[t]
\centering
\includegraphics[width=\linewidth]{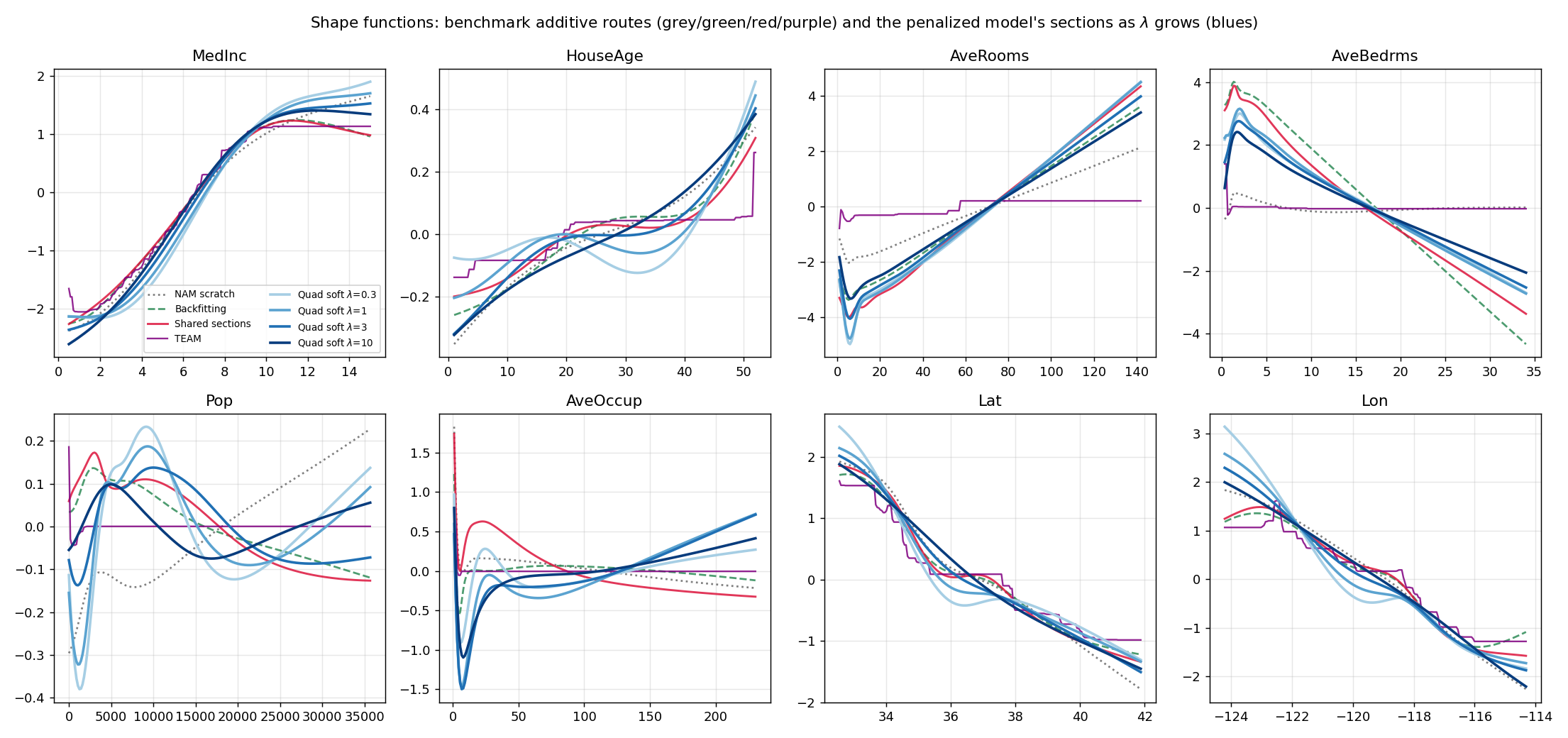}
\caption{\textbf{Agreement and convergence in one picture}, centered per feature. Four procedures with disjoint mechanisms---fixed masks (NAM, dotted grey), cyclic partial-residual refitting (backfitting, dashed green), the shared-section model (red), and bagged boosted stumps (TEAM, purple)---recover mutually consistent functions on the high-density regions of every feature, diverging only in sparse tails where the data cannot constrain them. The blue gradient overlays the \emph{penalized dense model's} sections at $\lambda=0.3,1,3,10$: at small $\lambda$ residual interactions leak into the sections and pull them off the common curves, and as $\lambda$ grows the sections collapse onto the consensus---the $\lambda$-convergence of Section~\ref{sec:sections} rendered directly against the independent routes that define the target.}
\label{fig:allshapes}
\end{figure}

The agreement is striking given how different the mechanisms are---TEAM's piecewise-constant staircases trace the same profiles the smooth neural fits draw, despite sharing no machinery with them. On median income all five curves coincide over the full range; on house age, latitude, and longitude they coincide over the data-dense regions, including the nontrivial multi-modal profiles of the two geographic coordinates---which, being the one-dimensional surrogates of a fundamentally two-dimensional effect, had no a priori reason to be stable across procedures. Disagreement concentrates exactly where it should: in the extreme tails of the heavy-tailed ratio features, where standardized values beyond a few $\sigma$ correspond to a handful of block groups and the components are unconstrained by data. The WTA route (the weakest by MSE among the six) is also the visible outlier where curves separate, consistent with its soft phase distorting components that fine-tuning cannot fully repair.

We read this as evidence that on this dataset the additive projection of the regression function is a well-identified object: the plateau at MSE ${\approx}0.38$--$0.40$ is not a family of accuracy-equivalent but functionally distinct solutions, but a single solution reached from five directions. A practical corollary: when two independent additive-fitting procedures disagree on a shape function in a data-dense region, at least one has not converged.

\paragraph{Identifiability under correlated features.} The tail disagreements deserve a sharper diagnosis than data scarcity alone, because a structural issue lurks beneath them. When features are strongly dependent---average rooms and average bedrooms here, with correlation above $0.8$---the additive \emph{sum} $f_i(x_i)+f_j(x_j)$ is well determined on the data manifold, but the individual components are not: adding $g(x_i)$ to one and subtracting its best predictor $\hat{g}(x_j)$ from the other changes each shape function while leaving predictions nearly unchanged wherever the two features co-vary. This is concurvity, the nonlinear analogue of collinearity, and it means that in strongly correlated directions our agreement figure shows five estimates of an under-determined quantity---their consistency in dense regions reflects shared inductive biases toward smoothness as much as identification by the data. Two families of remedies exist. Decompositions can be \emph{defined} to be identifiable under dependence: the weighted functional ANOVA of \citet{hooker2007anova} imposes orthogonality relative to the joint distribution, accumulated local effects \citep{apley2020ale} build components from local derivatives so that correlated marginals do not leak into one another, and purification \citep{lengerich2020purifying} post-hoc moves mass between components into canonical form. Alternatively, regularization can select among the equivalent decompositions during training: the concurvity penalty of \citet{siems2023concurvity} shrinks the correlation between fitted components, pinning down the representative with least mutual cancellation. The two regularizers compose naturally---the quadrilateral penalty controls interactions \emph{within} the model while the concurvity penalty controls redundancy \emph{between} its additive components---and their combination, quadrilateral for additivity plus concurvity for identifiability, is in our view the most promising follow-up experiment this paper does not run.

\section{Anatomy of the Failure Modes}
\label{sec:failures}

Three qualitatively different failures surfaced across the routes, and juxtaposing them reveals a common structure. Adam's stale moments moved pruned weights off their zeros, silently inflating the additivity gap by four orders of magnitude (Section~\ref{sec:exact}); unconstrained winner-take-all group assignment collapsed onto a single feature, killing the branches for median income and both geographic coordinates (Section~\ref{sec:exact}); and sequential backfitting on shared weights diverged because updating one section moved all the others, invalidating the partial residuals mid-sweep (Section~\ref{sec:sections-model}).

In each case, a guarantee was imported from a setting whose preconditions the new setting quietly violated. Mask-based sparsity assumes the \emph{entire} update rule respects the mask, but Adam's state is part of the update rule and predates the mask. Winner-take-all assignment inherits its intuition from balanced clustering, but nothing in arg-max dynamics enforces balance, and rich-get-richer feedback does the rest. Backfitting's convergence proof \citep{buja1989linear} assumes parameter-disjoint components, and weight sharing removes exactly that hypothesis. None of the three failures announces itself as a violated precondition: the first manifests as a functional property silently degrading while accuracy looks fine, the second as a model inexplicably worse than linear regression, the third as training that improves for fifteen sweeps and then explodes.

The practical lesson we take is that constraints on learned functions must be audited against the \emph{whole} training system---optimizer state, assignment dynamics, staleness of cached quantities---not just against the loss and the architecture. Each fix was cheap once the failure was understood (re-project after every step; cap group capacities; replace sequential residual fitting with a joint objective), and each failure is generic enough that we expect it to be present, and silent, in existing pipelines that combine masking, learned grouping, or alternating optimization with adaptive optimizers.

\section{Limitations}
\label{sec:limits}
\label{sec:limits}

The $\lambda$-sweep, section analysis, and crystallization comparison are conducted on California Housing only; Section~\ref{sec:multidataset} extends the converging routes to three further datasets and indeed overturns two single-dataset readings (the universality of the additivity tax, and the competitiveness of the crystallization pipeline), but six small tabular benchmarks remain a narrow base; the $\lambda$-sweep itself is single-dataset, while the surrender-curve analysis has now been replicated over five seeds on all four datasets (Section~\ref{sec:surrender}), which is what exposed the fragility of its initial single-seed reading. The penalty is enforced on the data manifold, so no claim is made about interactions off-distribution, and the stochastic estimator has a noise floor (${\sim}10^{-4}$ here) below which it cannot certify additivity---exactness always requires crystallization. Intermediate $\lambda$ yields behaviorally near-additive but structurally unconstrained models, which is a feature for the trade-off analysis and a bug for anyone hoping the knob traverses a nested family of interaction orders; the block-restricted variant of the penalty, and its comparison against GA$^2$M-style baselines with a learned (Lat, Lon) block, is the natural next experiment and is not performed here. The shared-net section model leads the neural additive routes on two of four datasets, but the implicit-basis-sharing hypothesis we offer for its advantage has not been isolated by ablation, and its $d{+}1$ forward passes per example are untested beyond $d=13$. The KAM analogy of Section~\ref{sec:surrender} is offered as a reframing, not a result. Finally, the per-step cost of the penalty is four forward passes times $|S|$ sampled coordinates, roughly tripling training cost at our settings; scaling behavior in $d$ is untested.

The natural next deployment is clinical tabular risk---MIMIC-style ICU mortality and readmission cohorts, where the additive lineage earned its reputation and where a measured interaction budget has audit value---which requires credentialed data access we deliberately leave to a dedicated study.

\section{Conclusion}

We compared routes from dense networks to additive predictors under a fixed budget, on six tabular datasets. A four-point finite-difference statistic serves throughout as instrument, penalty, and certificate: it turns additivity into a continuous training-time dial, reveals that most learned interactions are redundant and removable almost for free, and---because it decomposes by coordinate---exposes interaction as a quantity that evolves under optimization pressure. Watching that evolution taught us something we did not expect and then, on replication, corrected our first reading of it: which interactions survive an additivity regularizer is only weakly and unstably predicted by how large they were, so post-hoc interaction rankings are unreliable guides to what a regularized model will keep, and only online measurement on the actual run reveals it. The route comparison delivered its own reversals. Behavior-first-then-crystallize dominates weight-space constraints, classical backfitting with neural smoothers quietly matches modern per-feature architectures, one ordinary dense network read out through its own mean-conditioned sections---the shared-section model, the hard-projection counterpart of the quadrilateral penalty, sharing its $\lambda\to\infty$ target though not its parametrization---is the strongest exactly additive neural model we trained, and the same network trained soft with the penalty is stronger still where near-additivity suffices, and bagged boosted stumps---adaptive capacity allocation, transform invariance, nothing to mistune---remain the method to beat on tabular data. None of these rankings is universal: in low data, additivity itself becomes a regularizer---the quadrilateral penalty then improves accuracy and interpretability simultaneously---while multi-stage pipelines grow fragile; and the coefficient that prices the constraint should be set by the interpretability budget, not tuned on accuracy, which where interactions pay will simply buy the constraint away. Conceptually, the soft regime earns a defense that exact additivity cannot give: since perfect additive projections may demand shape functions too wild to read---the statistical shadow of Kolmogorov's superposition theorem, whose univariate representations are necessarily non-smooth, and the tension that forced ExU activations into the original NAM---a dense model with a small, \emph{measured} interaction residual and smooth sections can be the more interpretable object. And the three silent failures we hit along the way (Section~\ref{sec:failures}) taught a single lesson we suspect generalizes: each had imported a guarantee whose preconditions the new setting had quietly voided---and none of them announced it.

\bibliographystyle{unsrtnat}

\end{document}